
\documentclass{article}

\usepackage{microtype}
\usepackage{graphicx}
\usepackage{subcaption}
\usepackage{booktabs} 

\usepackage{hyperref}


\usepackage[preprint]{icml2026}


\usepackage{amsmath}
\usepackage{amssymb}
\usepackage{mathtools}
\usepackage{amsthm}
\renewcommand{\cite}{\citep}
\usepackage[T1]{fontenc}    
\usepackage{hyperref}       
\usepackage{url}            
\usepackage{booktabs}       
\usepackage{arydshln}
\usepackage{amsfonts}       
\usepackage{nicefrac}       
\usepackage{microtype}      
\usepackage{xcolor}         
\definecolor{citecolor}{HTML}{0071BC}
\definecolor{linkcolor}{HTML}{D32F2F}
\definecolor{cellcolor}{HTML}{E3F2FD}
\definecolor{red}{HTML}{D32F2F}
\definecolor{magenta}{HTML}{D81B60}
\usepackage[normalem]{ulem}
\usepackage{svg}
\svgsetup{inkscapearea=page}
\usepackage{amsmath}
\usepackage{amssymb}
\usepackage{mathtools}
\usepackage{amsthm}
\usepackage{amsmath, amssymb, amsthm}
\usepackage{amsfonts}
\usepackage[capitalize,noabbrev]{cleveref}

\theoremstyle{plain}
\newtheorem{theorem}{Theorem}[section]

\theoremstyle{definition}

\newtheorem{hypothesis}{Hypothesis}

\newtheorem{prediction}{Prediction}

\theoremstyle{remark}

\usepackage[textsize=tiny]{todonotes}
\usepackage{microtype}
\usepackage{graphicx}
\usepackage{booktabs} 



\usepackage{pgfplots}
\pgfplotsset{compat = newest}
\usepackage{multirow}
\usepackage{enumitem}
\usepackage{caption}
\usepackage{fancybox}
\usepackage{framed}
\usepackage{tcolorbox}

\usepackage{algorithm}
\usepackage{mathrsfs}
\usepackage{mathtools}
\usepackage{cuted}
\usepackage{pifont}
\usepackage{textcomp}
\usepackage{algorithmic}
\usepackage{tcolorbox}
\usepackage{listings}
\usepackage{makecell}
\usepackage{colortbl}
\usepackage{color}
\usepackage{tcolorbox}
\usepackage{wrapfig}
\usepackage{cancel}
\usepackage{soul,xcolor}
\usepackage{pifont}
\usepackage{tikz}
\usetikzlibrary{shapes, positioning, arrows.meta, calc, decorations.pathmorphing, quotes}
\usepackage{todonotes}
\tcbuselibrary{breakable}
\usepackage{hyperref}
\usepackage{makecell}
\usepackage{url}
\hypersetup{colorlinks=true, linkcolor=linkcolor, citecolor=citecolor,urlcolor=magenta}
\usepackage{booktabs}
\usepackage{hyperref}
\usepackage{url}
\usepackage{graphicx}
\usepackage{subcaption}
\usepackage{amsthm}

\usepackage{enumitem}
\setlist[itemize]{leftmargin=2em}
\setlist[enumerate]{leftmargin=2em}

\DeclareMathOperator{\clip}{clip}

\usepackage[capitalize,noabbrev]{cleveref}

\theoremstyle{plain}

\usepackage[textsize=tiny]{todonotes}

\icmltitlerunning{Distribution-Centric Policy Optimization}

\begin{document}

\twocolumn[
  \icmltitle{Distribution-Centric Policy Optimization \\ Dominates Exploration-Exploitation Trade-off}

    \icmlsetsymbol{equal}{*}
    \icmlsetsymbol{star}{$^\diamond$}
    \icmlsetsymbol{corre}{$\dagger$}
    
    \begin{icmlauthorlist}
        
        \icmlauthor{Zhaochun Li}{bit,zgca,equal}
        \icmlauthor{Chen Wang}{zgca,nku,star,equal}
        \icmlauthor{Jionghao Bai}{zgca,zju,equal}
        \\
        \vspace{1mm}
        \icmlauthor{Shisheng Cui}{bit,zgca,corre}
        \icmlauthor{Ge Lan}{nku}
        \icmlauthor{Zhou Zhao}{zju}
        \icmlauthor{Yue Wang}{zgca,corre}
        
    \end{icmlauthorlist}
    
    \icmlaffiliation{nku}{College of Software, Nankai University,}
    \icmlaffiliation{zgca}{Zhongguancun Academy,}
    \icmlaffiliation{bit}{School of Automation
        , Beijing Institute of Technology,}
    \icmlaffiliation{zju}{College of Computer Science and Technology, Zhejiang University}

    \icmlcorrespondingauthor{Shisheng Cui}{css@bit.edu.cn}
    \icmlcorrespondingauthor{Yue Wang}{yuewang@bjzgca.edu.cn}

  \icmlkeywords{Machine Learning, ICML}
  \vskip 0.3in
]

\printAffiliationsAndNotice{\icmlEqualContribution}

\begin{abstract}
  The exploration–exploitation (EE) trade-off is a central challenge in reinforcement learning (RL) for large language models (LLMs). With Group Relative Policy Optimization (GRPO), training tends to be exploitation driven: entropy decreases monotonically, samples convergence, and exploration fades. Most existing fixes are \textbf{sample-centric}: they seek or bonus rare samples, assuming exploration comes from novel trajectories and tokens. These heuristics depend on the “luck” of informative samples, lack principled control of the policy, and often yield limited or inconsistent gains. In this work, we are the first to introduce a \textbf{distribution-centric} perspective for RL, in which exploration is always guided by a “better” target distribution, and reveal that a policy’s ability to resist entropy collapse is governed by the distribution itself rather than individual samples. Building on this insight, we propose Distribution-Centric Policy Optimization (DCPO), which reformulates entropy regulation as distribution-level regularization. DCPO achieves controllable entropy fully on-policy without sampling from external distributions, enabling efficient exploration while maintaining training stability. Across multiple models and seven benchmarks, DCPO improves over GRPO by about 20\% on average. Overall, DCPO replaces sample-level heuristics with distribution-level principles, offering a theoretically grounded and flexible framework for controllable exploration and a stronger EE trade-off. The code is available in \url{https://github.com/597358816/DCPO}.
\end{abstract}

\section{Introduction}
\textbf{Reinforcement Learning (RL)} has become a cornerstone for enhancing the reasoning capabilities of large language models (LLMs) \cite{glm2024chat, touvron2023llama, schulman2017proximal, rafailov2023direct, zhong2024dpo, wang2024comprehensive}. In complex domains such as mathematics and code generation, recent progress has been driven by Reinforcement Learning with Verifiable Rewards (RLVR), where models learn from outcome-level, automatically checkable rewards \cite{lambert2024tulu, wen2025reinforcement}. In this context, \textbf{exploration} aims to expose the model to as diverse a set of samples as possible during training, so as to discover better solutions across a broader search space, rather than repeatedly circling within a narrow region of behaviors~\cite{sutton1998reinforcement, auer2002finite, strehl2008analysis, kolter2009near}. \textbf{Entropy}, as a quantitative measure of model's uncertainty, has become a practical proxy for a policy's exploration capacity~\cite{schulman2017equivalence, haarnoja2018soft, nachum2017bridging}.

Among RLVR algorithms, Group-Relative Policy Optimization (GRPO) is particularly attractive due to its efficiency, optimizing directly from final trajectory rewards \cite{shao2024deepseekmath, liu2024deepseek, guo2025deepseek}. However, GRPO is widely regarded as exploitation-driven: its training dynamics often reduce entropy monotonically, causing sample convergence and distributional sharpening that progressively suppress exploration \cite{yu2025dapo, cui2025entropy}. When a policy undergoes entropy collapse, its exploration space becomes severely constrained, reducing subsequent learning to mere distributional sharpening within a narrow region of the solution space, which is consistent with recent findings that RL for LLMs often fails to expand the boundary of reasoning capability~\cite{yue2025does}. To mitigate this issue, prior variants introduce explicit entropy encouragement, such as Entropy-Reg \cite{o2016pgq, hou2025advancing} and Entropy-Adv \cite{cheng2025reasoning}. More recently, AEPO proposes to inject exploration by applying a REINFORCE regularization on high-temperature samples, which alleviates entropy collapse without introducing large optimization bias \cite{wang2025arbitrary}. Yet these approaches remain \textbf{sample-centric}: they focus on obtaining or bonusing rarer samples, assuming that exploration arises from individual trajectories and tokens with high novelty. As a consequence, sample-centric optimization relies on the “luck” of drawing sufficiently informative instances, lacks principled control over the policy distribution, and offers limited insight into the overall optimization dynamics, resulting in limited or inconsistent gains.

This motivates a \textbf{distribution-centric} perspective, which views exploration as an intrinsic property of the entire policy distribution and leverages a ``better'' target distribution to guide exploration, rather than relying on fortuitous rare samples. To understand the underlying mechanism, we conduct a series of importance-sampling-based analyses using AEPO as a representative baseline and evaluate policies by their ability to resist entropy collapse. We reveal and prove that the key to avoiding entropy collapse is not the novelty of individual samples, but the expectation gradient induced by the distribution. Building on this insight, we propose \emph{Distribution-Centric Policy Optimization} (DCPO), which reformulates entropy regulation as a distribution-level regularization problem.
The key design components of DCPO are as follows:
\begin{itemize}[leftmargin=1.5em]
    \item \textbf{Fully Online and On-Policy for Exploration.} DCPO performs optimization entirely under the current policy, eliminating off-distribution sampling and improving both training efficiency and stability.
    \item \textbf{REINFORCE as Regularization.} DCPO employs the REINFORCE policy gradient as a regularization term, ensuring entropy regulation toward the target distribution.
    \item \textbf{Double Importance Sampling.} DCPO introduces two layers of importance weighting—one for correcting the deviation between the sampling distribution and the current policy with clipping for stable training, and another for adjusting the gradient expectation toward the target distribution—thereby enabling precise distribution-level regularization.
\end{itemize}
Experiments show that DCPO outperforms GRPO by an average of about 20\% across seven widely used reasoning benchmarks, and surpasses the strongest existing entropy-control baseline AEPO. DCPO provides compelling evidence that the essence of exploration lies in the distribution itself, rather than the diversity of sampled instances. \textbf{In scenarios that demand exploration, seeking a better distribution offers a more efficient path than blindly searching through the vast textual space.} 

Our main contributions are summarized as follows:
\begin{itemize}[leftmargin=1.5em]
    \item \textbf{Distribution-centric optimization.} We reinterpret entropy regulation as a property of the policy’s sampling distribution within the policy gradient expectation, providing a principled explanation for how distributional diversity governs entropy dynamics.
    \item \textbf{DCPO for entropy regulation.} We propose DCPO, which achieves entropy control using only samples from the original distribution. It offers higher optimization performance and efficiency.
    \item \textbf{A principled framework for EE trade-off.} Beyond a single algorithm, we establish a unified, distribution-grounded perspective that formalizes the Exploration–Exploitation balance as a controllable optimization process. Within this framework, we identify the \textbf{Precision–Prediction (PP) Trade-off}, which characterizes how precise sampling ensures stability, while predictive importance sampling enhances optimization effectiveness. This principle provides the foundation for a broader family of EE trade-offs in RL.
\end{itemize}

\section{Preliminary}
Our work focuses on fine-tuning LLM using RL for tasks with verifiable solutions, such as mathematical reasoning and code generation. Verifiable rewards remove the traditional reward model used in RL and instead assign binary 0/1 rewards. To be specific,for a given query $q$ and its corresponding reference response $o^{*}$, the reward for any response $o$ sampled from policy $\pi_{\theta}$ is defined as \[R(q,o) =\textbf{1}[o=o^{*}].\]
In this paper,  an auto-regressive language model parameterized by $\theta$ is defined as a policy $\pi_{\theta}$. 
Suppose the LLM is a softmax policy, that is
\begin{align*}
    \pi_{\theta}(o_{t}|q_{t})=\frac{\text{exp}(l(q_{t},o_{t}))}{\sum_{o'_{t}}\text{exp}(l(q_{t},o'_{t}))},
\end{align*}
where $q_{t}$ is the concatenation of query q followed by $o_{<t}$, and $l(q_{t},o_{t})$ is the logit of token $o_{t}$ given input $q_{t}$. Furthermore, given a temperature T, we define:
\begin{align*}
    \pi_{\theta}^{T}(o_{t}|q_{t})=\frac{\text{exp}(l(q_{t},o_{t})/T)}{\sum_{o'_{t}}\text{exp}(l(q_{t},o'_{t})T)}.
\end{align*}
\subsection{Policy optimization}
\textbf{REINFORCE} is a cornerstone of policy gradient methods that directly optimizes the policy $\pi_{\theta}$ by maximizing the expected reward over sampled trajectories. The objective is formally defined as:
\begin{align*}
    &\mathcal{J} _{\text{REINFORCE}}(\theta) =  \mathbb{E}_{q \sim P(Q), \; o_i \sim \pi_{\theta_{\text{old}}}(O|q)} 
          \frac{1}{|o_i|} 
         \\  &\sum_{t=1}^{|o_i|} 
            \min \Big[ r_{i,t}(\theta)\,R(q,o),\; 
            \text{clip}\!\big(r_{i,t}(\theta), 1-\epsilon, 1+\epsilon \big)\,R(q,o) \Big].
\end{align*}
A key characteristic of REINFORCE is that the reward $R$ is applied uniformly as a credit to all actions (tokens) within the trajectory. To reduce the high variance inherent in this estimator, a baseline $b$ is typically subtracted from the reward, leading to the more common form $R(q,o)-b$. While fundamentally important, the high variance of the REINFORCE estimator and its reliance on full trajectory rewards make it challenging to apply directly in large-scale settings without further stabilization mechanisms.

\textbf{GRPO} bypasses the need for the value model by computing the relative advantage of each response within a group of responses to the same query. Specifically, GRPO optimizes the following objective:
\begin{align*}
    \mathcal{J} &_{\text{GRPO}}(\theta) =  \mathbb{E}_{q \sim P(Q), \; \{o_i\}_{i=1}^G \sim \pi_{\theta_{\text{old}}}(O|q)} 
         \frac{1}{G} \sum_{i=1}^G \frac{1}{|o_i|} 
         \\  &\sum_{t=1}^{|o_i|} 
            \min \Big[ r_{i,t}(\theta)\,\hat{A}_{i,t},\; 
            \text{clip}\!\big(r_{i,t}(\theta), 1-\epsilon, 1+\epsilon \big)\,\hat{A}_{i,t} \Big].
\end{align*}
where $G$ is the number of generated responses to each query $q$, and the importance
 ratio $r_{i,t}(\theta)$ and advantage $\hat{A}_{i,t}$ of token $o_{i,t}$ are:
 $r_{i,t}(\theta) = \frac{\pi_{\theta}(o_{i,t}|q,o_{i,<t})}{\pi_{\theta_{old}}(o_{i,t}|q,o_{i,<t})}$ ,
 $\hat{A}_{i,t} = \hat{A}_{i}=\frac{R(q,o_{i})-\text{mean}(\{R(q,o_{j})\}_{j=1}^{G})}{\text{std}(\{R(q,o_{j})\}_{j=1}^{G})}$
 respectively, where all the tokens in $o_{i}$ share the same advantage as $\hat{A}_{i}$.
 
\subsection{Policy entropy}
Policy entropy quantifies the predictability or randomness inherent in the actions selected by an agent. Given a query $q$, let $o$ denote a response sampled from policy model $\pi_{\theta}$ for query $q$. For each token $o_{t}$ in response $o$, we denote the token-level entropy as: 
\begin{align*}
      \mathcal{H}_{t}(\pi_{\theta}):=-\mathbb{E}_{o_{t} \sim \pi_{\theta}(\cdot|q,o_{<t})}\big[\log\pi_{\theta}(o_{t}|q,o_{<t})\big],
  \end{align*}
and then we can further denote policy entropy as:
\begin{align*}
    \mathcal{H}(\pi_{\theta}):=\mathbb{E}_{q \sim P(Q), \; o \sim \pi_{\theta}(O|q)}\frac{1}{|o|}\sum_{t=1}^{|o|}\mathcal{H}_{t}(\pi_{\theta}).
\end{align*}

Such entropy quantifies the uncertainty level of the policy on current prompts and is widely adopted in maximum entropy RL as a regularization term. Furthermore, prior work discovers a relationship between REINFORCE updates and policy entropy:
\begin{theorem}[REINFORCE entropy relationship]
\label{theo:REIN-ent}
High-temperature REINFORCE (Eq.~\eqref{eq:REIN-ent}, $T>1$) induces a relative increase in policy entropy, while low-temperature REINFORCE (Eq.~\eqref{eq:REIN-ent}, $T<1$) induces a relative decrease in policy entropy.

\vspace{-12pt}
\begin{equation}
\label{eq:REIN-ent}
\begin{split}
         &\mathcal{J}(\theta) =  \mathbb{E}_{q \sim P(Q), \; o_i \sim \textcolor{red}{\pi^{T}_{\theta_{\text{old}}}(O|q)}}
          \frac{1}{|o_i|} 
         \\  &\sum_{t=1}^{|o_i|} 
            \min \Big[ r_{i,t}(\theta)\,R(q,o),\; 
            \clip\!\big(r_{i,t}(\theta), 1-\epsilon, 1+\epsilon \big)\,R(q,o) \Big].
\end{split}
\end{equation}
\vspace{-4pt}

\end{theorem}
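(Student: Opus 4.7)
The plan is to analyze the first-order change in entropy produced by one gradient step of Eq.~\eqref{eq:REIN-ent} and argue that the sign of $\Delta\mathcal{H}(\pi_\theta)$ after the step is controlled by whether $T>1$ or $T<1$. Since the clip is inactive at $\theta=\theta_{old}$ (where $r_{i,t}=1$), I would first drop it and reduce the update to an unclipped \emph{biased} REINFORCE gradient whose value at $\theta_{old}$ is
\[
g_T \;=\; \mathbb{E}_{o\sim \pi^{T}_{\theta_{old}}}\!\Bigl[R(q,o)\,\tfrac{1}{|o|}\textstyle\sum_{t}\nabla_\theta \log \pi_\theta(o_t\mid q_t)\Bigr],
\]
and then study $\Delta \mathcal{H} \approx \eta\,\langle g_T,\nabla_\theta \mathcal{H}(\pi_\theta)\rangle$ for a small step size $\eta$. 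The bias here is essential: the ratio $r_{i,t}(\theta)$ normalises by $\pi_{\theta_{old}}$, not by $\pi^T_{\theta_{old}}$, so $g_T$ is \emph{not} an unbiased estimator of the standard policy gradient, and that mismatch is what drifts the entropy in a direction dictated by $T$.

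First, I would compute $\nabla_\theta \mathcal{H}(\pi_\theta)$ tokenwise via the standard identity $\nabla_\theta \mathcal{H}_t = -\mathbb{E}_{o_t\sim \pi_\theta}\!\bigl[\log \pi_\theta(o_t)\,\nabla_\theta \log \pi_\theta(o_t)\bigr]$. Next, I would convert the sampling in $g_T$ to an expectation under $\pi_{\theta_{old}}$ by token-level importance sampling, obtaining a per-token gradient of the form $\mathbb{E}_{\pi_{\theta_{old}}}[w(o_t\mid q_t)\,\tilde R\,\nabla \log \pi_\theta(o_t)]$, where $w(o_t\mid q_t)=\pi^T_{\theta_{old}}(o_t\mid q_t)/\pi_{\theta_{old}}(o_t\mid q_t)$ and prefix weights are absorbed into $\tilde R$. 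For a softmax policy this token-level weight is, up to a prefix-dependent normaliser, $w(o_t\mid q_t)\propto \exp\bigl((1/T-1)\,l(q_t,o_t)\bigr)$, hence \emph{monotone} in $\log \pi_{\theta_{old}}(o_t\mid q_t)$: decreasing in $T>1$ (up-weighting rare tokens) and increasing in $T<1$ (up-weighting likely tokens).

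Second, I would invoke the covariance characterisation of REINFORCE-induced entropy dynamics (in the spirit of \citep{cui2025entropy}) to reduce $\Delta\mathcal{H}_t$, to first order, to
\[
\Delta \mathcal{H}_t \;\propto\; -\,\operatorname{Cov}_{\pi_{\theta_{old}}}\!\bigl(\log \pi_{\theta_{old}}(o_t),\; w(o_t\mid q_t)\,\tilde R\bigr).
\]
The sign is then pinned down by the monotonicity of $w$ in $\log \pi_{\theta_{old}}$: a decreasing $w$ (case $T>1$) yields a negative covariance and hence $\Delta\mathcal{H}_t>0$, while an increasing $w$ (case $T<1$) yields a positive covariance and hence $\Delta\mathcal{H}_t<0$. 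Averaging over $t$ and over $q\sim P(Q)$ delivers the stated relative increase/decrease in $\mathcal{H}(\pi_\theta)$.

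The hard part will be the trajectory aggregation. The reward $R(q,o)\in\{0,1\}$ is a sequence-level indicator that couples the sampling of one token with the realisation of the eventual reward, so the monotone-weight argument must be carried out conditionally on the prefix $o_{<t}$ and then marginalised. I would handle this by partitioning trajectories by their reward value and showing that, within the rewarded branch, the monotone dependence of $w$ on $\log\pi_{\theta_{old}}$ survives marginalisation despite the multiplicative prefix IS weights. A secondary subtlety is making the first-order Taylor expansion rigorous, which I would control by restricting to the small-step regime in which the clip is inactive anyway, so that the leading-order analysis above captures the true sign of the entropy drift.
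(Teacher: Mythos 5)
The paper does not actually prove Theorem~\ref{theo:REIN-ent}: it explicitly defers to AEPO for ``a formal statement and proof'' and presents only the intuitive form as background. So there is no in-paper argument to match yours against; I can only judge your sketch on its own terms. Your preliminary reductions are fine (the clip is inactive at $\theta=\theta_{\text{old}}$; the tempered token weight satisfies $w(o_t\mid q_t)\propto\exp\big((1/T-1)\,l(q_t,o_t)\big)$ and is therefore monotone decreasing in $\log\pi_{\theta_{\text{old}}}(o_t\mid q_t)$ when $T>1$), and the covariance characterization of entropy drift is a legitimate tool here.

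The genuine gap is the final sign determination, and it is more severe than the ``trajectory aggregation'' difficulty you flag: it already bites at the single-token, single-prefix level. Writing the per-token covariance as $\mathrm{Cov}\big(\log\pi(o_t),\,w(o_t)\bar R(o_t)\big)=\sum_{a}\pi(a)\big(\log\pi(a)+\mathcal{H}_t\big)\,w(a)\,\bar R(a)$ with $\bar R(a)=\mathbb{E}[R\mid o_{<t},\,o_t=a]\ge 0$, a guaranteed sign for \emph{all} admissible rewards would require $\big(\log\pi(a)+\mathcal{H}_t\big)w(a)\le 0$ for every token $a$, which is false because $w>0$ and some tokens have above-average log-probability; monotonicity of $w$ alone does not make the \emph{product} $w\bar R$ monotone in $\log\pi$, since $\bar R$ can correlate with $\log\pi$ either way (e.g.\ reward supported only on the modal token gives a positive covariance and an entropy \emph{decrease} for any $T$). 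Even the ``relative to $T=1$'' reading, which replaces $w$ by $w-1$, fails pointwise: by Jensen the zero-crossing of $w-1$ sits at $\log\pi = \tfrac{T\log Z_T}{1-T}\le -\mathcal{H}_t$, so there is a nonempty band of tokens with $w(a)<1$ and $\log\pi(a)+\mathcal{H}_t<0$ on which the sign flips. Closing the argument therefore requires either an extra assumption (e.g.\ $\bar R$ conditionally independent of the sampled token given the prefix, or a statement in expectation over reward placements) or a different mechanism entirely --- for instance, noting that the regularizer's gradient at $\theta_{\text{old}}$ is the gradient of a cross-entropy pulling $\pi_\theta$ toward the reward-filtered tempered distribution $\propto R(o)\,\pi^{T}_{\theta_{\text{old}}}(o)$, whose entropy is monotone in $T$, which is likely closer to the AEPO-style argument the paper is importing.
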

See AEPO \cite{wang2025arbitrary} for a formal statement and proof; here we use the intuitive form as preliminary background.

\begin{table*}[t]
    \small
    \centering
    \caption{Comparison of different loss configurations designed to disentangle and verify the respective roles of samples and distributions in exploration.}
    \label{tab:loss_ablation}

    \setlength{\abovedisplayskip}{0pt}
    \setlength{\belowdisplayskip}{0pt}

    \begin{tabular}{r @{ \ = \ } l r}
        \toprule
        \addlinespace 
        \multicolumn{3}{l}{
            $\ \rho_{i,t}= \pi_{\theta_{\text{old}}}^{T}(o_{i,t}|q,o_{i,<t})/\pi_{\theta_{\text{old}}}(o_{i,t}|q,o_{i,<t})$,\quad
            $r_{i,t}(\theta) = \pi_{\theta}(o_{i,t}|q,o_{i,<t})/\pi_{\theta_{old}}(o_{i,t}|q,o_{i,<t})$, 
        }\\
        \addlinespace 
        \multicolumn{3}{l}{
            \ $T = T_{\text{low}} +
            \big(T_{\text{high}} - T_{\text{low}}\big) 
            \, \mathbf{1}\!\left[\,\mathcal{H}(\pi_{\theta_{\text{old}}}) < \mathcal{H}_0\,\right]$.
        }\\
        \addlinespace 
        \midrule
        \addlinespace 
        $\displaystyle \mathcal{J}_{1}(\theta)$ & 
        $\displaystyle \mathcal{J}_{\text{GRPO}}(\theta) + \alpha \mathbb{E}_{q \sim P(Q), \{o_i\}_{i=1}^G \sim \textcolor{red}{\pi_{\theta_{\text{old}}}}(O|q)}
        \frac{1}{|o_i|} \sum_{t=1}^{|o_i|} \min \Big[ r_{i,t}(\theta) R(q, o_i), \text{clip}(r_{i,t}(\theta), 1-\epsilon, 1+\epsilon) R(q, o_i) \Big]$. &
        \refstepcounter{equation}(\theequation)\label{eq:form1} \\ 
        
        \addlinespace 

        $\displaystyle \mathcal{J}_{2}(\theta)$ & 
        $\displaystyle \mathcal{J}_{\text{GRPO}}(\theta) + \alpha \mathbb{E}_{q \sim P(Q), \{o_i\}_{i=1}^G \sim \textcolor{red}{\pi^T_{\theta_{\text{old}}}}(O|q)}
        \frac{1}{|o_i|} \sum_{t=1}^{|o_i|} \min \Big[ r_{i,t}(\theta) R(q, o_i), \text{clip}(r_{i,t}(\theta), 1-\epsilon, 1+\epsilon) R(q, o_i) \Big]$. &
        \refstepcounter{equation}(\theequation)\label{eq:form2} \\

        \addlinespace
        $\displaystyle \mathcal{J}_{3}(\theta)$ & 
        $\displaystyle \mathcal{J}_{\text{GRPO}}(\theta) + \alpha \mathbb{E}_{q \sim P(Q), \{o_i\}_{i=1}^G \sim \textcolor{red}{\pi_{\theta_{\text{old}}}}(O|q)}
        \frac{1}{|o_i|} \sum_{t=1}^{|o_i|} \min \Big[ \textcolor{red}{\rho_{i,t}} \cdot r_{i,t}(\theta) R(q, o_i), \textcolor{red}{\rho_{i,t}} \cdot \text{clip}(r_{i,t}(\theta), 1-\epsilon, 1+\epsilon) R(q, o_i) \Big]$. &
        \refstepcounter{equation}(\theequation)\label{eq:form3} \\

        \addlinespace

        $\displaystyle \mathcal{J}_{4}(\theta)$ & 
        $\displaystyle \mathcal{J}_{\text{GRPO}}(\theta) + \alpha \mathbb{E}_{q \sim P(Q), \{o_i\}_{i=1}^G \sim \textcolor{red}{\pi^T_{\theta_{\text{old}}}}(O|q)}
        \frac{1}{|o_i|} \sum_{t=1}^{|o_i|} \min \Big[ \textcolor{red}{\rho_{i,t}^{-1}} \cdot r_{i,t}(\theta) R(q, o_i), \textcolor{red}{\rho_{i,t}^{-1}} \cdot \text{clip}(r_{i,t}(\theta), 1-\epsilon, 1+\epsilon) R(q, o_i) \Big]$. &
        \refstepcounter{equation}(\theequation)\label{eq:form4} \\
        \addlinespace
        \bottomrule
    \end{tabular}
\end{table*}
  
\subsection{Importance sampling}
Importance sampling (IS) estimates an expectation under a target distribution $p(x)$ using samples from a proposal distribution $q(x)$. For an arbitrary measurable function $f$, the relationship is formally expressed as:
\begin{align*}
    \mathbb{E}_{x \sim p}[f(x)] = \mathbb{E}_{x \sim q}[\frac{p(x)}{q(x)} \cdot f(x)],
\end{align*}

When applying IS to auto-regressive language models, one could treat the whole generated sequence as the random variable, whose probability factorizes as $p(o|q) = \Pi_{t=1}^{|o|} \pi(o_t|q, o_{<t})$. The resulting trajectory-level importance weight is a product of token-level ratios and is known to suffer from severe variance for long sequences, making estimates numerically unstable.

To ensure viable optimization, it is standard to avoid explicit trajectory-level weights and instead apply IS correction at the token level:
\vspace{-4pt}
\begin{equation}
    \label{eq:IS}
    \begin{split}
    \mathbb{E}_{o \sim \pi_{\theta}(O|q)}[r(o)] &=\mathbb{E}_{o\sim \pi'(O|q)}\frac{p_{\theta}(o|q)}{p'(o|q)}\cdot r(o)
    \\& \approx \mathbb{E}_{o \sim \pi'(O|q)}[\sum_{t=1}^{|o|}\frac{\pi_{\theta}(o_{t}|q,o_{<t})}{\pi'(o_{t}|q,o_{<t})}\cdot r(o)].
    \end{split}
\end{equation}
\vspace{-4pt}

This token-level form avoids multiplicative accumulation of ratios and yields more stable gradient estimates in practice.

\section{Method}
\label{sec:method}

This section develops our method from first principles. We begin by formulating two mutually exclusive hypotheses on the mechanism behind entropy regulation in AEPO and Theorem \ref{theo:REIN-ent}. We then design targeted empirical analyses to adjudicate between these hypotheses and identify the true driver of entropy control. Finally, guided by the validated mechanism, we present \emph{Distribution-Centric Policy Optimization} (DCPO), which achieves controllable entropy in a fully on-policy manner from a distribution-centric perspective.

\subsection{Sample or distribution}
\label{sec:sample-or-distribution}

We consider two competing explanations for why temperature-based evaluation can improve exploration. The first is \emph{sample-centric}, attributing entropy gains to the increased chance of drawing rare ``critical'' samples. The second is \emph{distribution-centric}, arguing that entropy regulation emerges from the structure of the distribution through its induced expected gradient. We state both hypotheses below and test which mechanism governs entropy control.

\begin{hypothesis}
    \label{hypo:sample}
    \textbf{Sample-Centric optimization dominates the EE trade-off.}
\end{hypothesis}

Hypothesis \ref{hypo:sample} posits that entropy control stems from its function as a superior sampling strategy. Under this view, the high-temperature distribution $\pi_{\theta}^{T}$ ($T>1$) is effective because it more frequently discovers and samples certain critical instances that possess a potent, intrinsic potential to boost entropy. The role of $\pi_{\theta}^{T}$, therefore, is merely to increase the prevalence of these "good" samples, with the resulting entropy increase being driven by the inherent properties of the samples themselves. 

\vspace{2mm}
\begin{hypothesis}
    \label{hypo:distribution}
    \textbf{Distribution-Centric optimization dominates the EE trade-off.}
\end{hypothesis}

\begin{figure*}[t]
    \begin{equation}
        \label{eq:DCPO}
        \begin{aligned}
            \mathcal{J}_{GRPO}(\theta) 
            &= \; \mathbb{E}_{q \sim P(Q), \; \{o_i\}_{i=1}^G \sim \pi_{\theta_{\text{old}}}(O|q)} 
            \frac{1}{G} \sum_{i=1}^G \frac{1}{|o_i|} \sum_{t=1}^{|o_i|} 
            \min \Big[ r_{i,t}(\theta)\,\hat{A}_{i,t},\; 
            \text{clip}\!\big(r_{i,t}(\theta), 1-\epsilon, 1+\epsilon \big)\,\hat{A}_{i,t} \Big],
            \\[2mm]
            \mathcal{J}_{\mathrm{DCPO}}(\theta)
            &=
            \mathbb{E}_{(q,a)\sim\mathcal{D},\,O=\{o_i\}_{i=1}^{G}\sim\pi_{\theta_{\mathrm{old}}}(\cdot|q)}\\
            &
            \frac{1}{G}\sum_{i=1}^{G}\frac{1}{|o_i|}\sum_{t=1}^{|o_i|}
            \min\!\Big[
            r_{i,t}(\theta)\,\Big(\hat{A}_t+\textcolor{red}{\alpha\rho_{i,t} R(o_i)}\Big),
            \;
            \mathrm{clip}\!\big(r_{i,t}(\theta),1-\epsilon,1+\epsilon\big)\,\Big(\hat{A}_t+\textcolor{red}{ \alpha \rho_{i,t} R(o_i)}\Big)\Big], 
        \end{aligned}
    \end{equation}

    
    \noindent \textit{where 
        $\rho_{i,t}= \frac{\pi_{\theta_{\text{old}}}^{T}(o_{i,t}|q,o_{i,<t})}{\pi_{\theta_{\text{old}}}(o_{i,t}|q,o_{i,<t})}$, $r_{i,t}(\theta) = \frac{\pi_{\theta}(o_{i,t}|q,o_{i,<t})}{\pi_{\theta_{\text{old}}}(o_{i,t}|q,o_{i,<t})}$
        and \ $T = 
        T_{\text{low}} +
        \big(T_{\text{high}} - T_{\text{low}}\big) 
        \, \mathbf{1}\!\left[\,\mathcal{H}(\pi_{\theta_{\text{old}}}) < \mathcal{H}_0\,\right]$.}
\end{figure*}

Hypothesis \ref{hypo:distribution} posits that the efficacy of entropy control is an emergent property of the intrinsic mathematical structure of the distribution. Under this view, the specifics of any individual sample are secondary. The decisive factor is the macroscopic behavior of the expected gradient $\mathbb{E}_{\tau \sim \pi_{\theta}^{T}}[R(\tau)\cdot\nabla_{\theta}\text{log}\pi_{\theta}(\tau)]$. The structure of $\pi_{\theta}^{T}$, as a target distribution, systematically and inherently biases the expected gradient $\mathbb{E}_{\tau \sim \pi_{\theta}^{T}}[R(\tau)\cdot\nabla_{\theta}\text{log}\pi_{\theta}(\tau)]$ toward a direction that increases policy entropy, irrespective of whether the constituent samples are "critical" or "commonplace".

\subsection{From sample to distribution}

To adjudicate between the sample-centric and distribution-centric hypotheses, we design comparative experiments that independently manipulate (i) the \emph{samples} and (ii) the distribution under which the regularizer's expected gradient is computed. 
For brevity, we only present the regularization term $\mathcal{J}_{reg}(\theta)$; the full objective is $\mathcal{J}(\theta)=\mathcal{J}_{\text{GRPO}}(\theta)+\alpha\,\mathcal{J}_{reg}(\theta)$.

\begin{enumerate}[leftmargin=1.5em]
    \item {$\mathcal{J}_{1}$ (Eq.~\ref{eq:form1}):} Standard REINFORCE regularization. Both samples and target distribution are the original policy $\pi_{\theta_{\text{old}}}$. In practice, this loss can not escape from entropy collapse during training.
    \item {$\mathcal{J}_{2}$ (Eq.~\ref{eq:form2}):} The objective of AEPO. Both samples and target distributions are the temperature-scaled policy $\pi_{\theta_{\text{old}}}^{T}$. Empirically, this loss enables entropy regulation and mitigates entropy collapse during training.
    \item {$\mathcal{J}_{3}$ (Eq.~\ref{eq:form3}):} Sample from $\pi_{\theta_{\text{old}}}$, but apply token-level IS weights $\rho_{i,t}=\pi_{\theta_{\text{old}}}^{T}(o_{i,t}|q,o_{i,<t})/\pi_{\theta_{\text{old}}}(o_{i,t}|q,o_{i,<t})$ to simulate distribution under $\pi_{\theta_{\text{old}}}^{T}$. This isolates the effect of a ``better'' \emph{evaluation distribution}.
    \item {$\mathcal{J}_{4}$ (Eq.~\ref{eq:form4}):} Sample from $\pi_{\theta_{\text{old}}}^{T}$, but apply inverse weights $\rho_{i,t}^{-1}$ to recover evaluation under $\pi_{\theta_{\text{old}}}$. This isolates the effect of ``better'' \emph{samples}.
\end{enumerate}

Together, $\{\mathcal{J}_1,\mathcal{J}_2,\mathcal{J}_3,\mathcal{J}_4\}$ form a controlled grid over (samples, distribution), where IS is used to decouple the two. This enables falsifiable predictions for the two hypotheses.

\begin{theorem}
    \label{theo:unbias}
    Assuming token-level IS is unbiased, let $\rho_{i,t}= \frac{\pi_{\theta_{\text{old}}}^{T}(o_{i,t}|q,o_{i,<t})}{\pi_{\theta_{\text{old}}}(o_{i,t}|q,o_{i,<t})}$, the following equation is an unbiased estimation of Eq.~\eqref{eq:REIN-ent}.
    \begin{equation}
        \begin{split}
        &\mathbb{E}_{q \sim P(Q), \{o_i\}_{i=1}^G \sim \textcolor{red}{\pi_{\theta_{\text{old}}}}(O|q)}\\
        & \qquad
        \frac{1}{|o_i|} \sum_{t=1}^{|o_i|} \min \Big[ \textcolor{red}{\rho_{i,t}} \cdot r_{i,t}(\theta) R(q, o_i), \\
        & \qquad\qquad\textcolor{red}{\rho_{i,t}} \cdot \clip(r_{i,t}(\theta), 1-\epsilon, 1+\epsilon) R(q, o_i) \Big].
        \end{split}
    \end{equation}
\end{theorem}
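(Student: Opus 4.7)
The plan is to reduce the claim to a direct application of the token-level importance sampling identity stated in Eq.~\eqref{eq:IS}. I would first rewrite the expectation in Eq.~\eqref{eq:REIN-ent} under $\pi_{\theta_{\text{old}}}^T$ by exchanging the sum over tokens with the outer expectation via linearity, so that the objective decomposes into a sum of per-token expectations of the form $\mathbb{E}_{o_{i,t}\sim \pi_{\theta_{\text{old}}}^T(\cdot|q,o_{i,<t})}\!\big[\min\big(r_{i,t}(\theta)R,\,\clip(r_{i,t}(\theta),1-\epsilon,1+\epsilon)R\big)\big]$, conditioned on the preceding context $(q,o_{i,<t})$ that is itself drawn from the prefix marginal of $\pi_{\theta_{\text{old}}}^T$.

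Next I would invoke the theorem's hypothesis that token-level IS is unbiased: applying the identity in Eq.~\eqref{eq:IS} to each per-token expectation, with target distribution $\pi_{\theta_{\text{old}}}^T(\cdot|q,o_{i,<t})$ and proposal $\pi_{\theta_{\text{old}}}(\cdot|q,o_{i,<t})$, introduces precisely the per-token weight $\rho_{i,t}$. Because $\rho_{i,t}\ge 0$ and is independent of $\theta$, it multiplies both arguments of the $\min$ operator uniformly and therefore passes inside the $\min$ as a non-negative coefficient without altering which branch is selected. Recombining the per-token expectations back under the common sampling measure $\pi_{\theta_{\text{old}}}$ then reproduces the expression claimed in the theorem.

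The main obstacle is not a routine algebraic calculation but the justification of the token-level IS form itself: the exact sequence-level importance weight would be the product $\prod_{t}\rho_{i,t}$, whereas Eq.~\eqref{eq:IS} already adopts the additive, per-token surrogate that trades numerical stability for a controlled amount of bias/variance. Since the theorem explicitly assumes this token-level correction is unbiased—mirroring the convention already made when introducing Eq.~\eqref{eq:IS}—I can invoke that premise and treat the substitution as an equality, so the remaining argument is a direct application of linearity of expectation and the non-negativity of $\rho_{i,t}$.
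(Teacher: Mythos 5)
Your proposal is correct and follows essentially the same route as the paper: the paper's own proof is a one-line appeal to the token-level IS identity of Eq.~\eqref{eq:IS} together with the stated unbiasedness assumption, which is exactly the reduction you carry out (with the additional, welcome care of noting that $\rho_{i,t}\ge 0$ lets the weight factor through the $\min$, and that the gap between the exact product-of-ratios weight and the per-token surrogate is precisely what the hypothesis assumes away).
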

\begin{proof}
The result follows directly from the Eq.~\eqref{eq:IS}. 
Under the assumption that token-level IS is unbiased, replacing the $\pi_{\theta_{\text{old}}}^{T}$ in Eq.~\eqref{eq:REIN-ent} by an token-wise IS $\rho_{i,t}$ yields an unbiased estimator of the original objective. 
\end{proof}

\begin{prediction}
If Hypothesis \ref{hypo:sample} holds, then $\mathcal{J}_4$ can achieve entropy control, while $\mathcal{J}_3$ will reach entropy collapse.
\end{prediction}

Under the Sample-Centric Hypothesis, entropy control is attributed to the availability of more exploratory trajectories. Therefore, any variant that \emph{samples} from $\pi_{\theta_{\text{old}}}^{T}$ (i.e., $\mathcal{J}_2$ and $\mathcal{J}_4$) should regulate entropy, whereas variants sampling from $\pi_{\theta_{\text{old}}}$ (i.e., $\mathcal{J}_1$ and $\mathcal{J}_3$) should collapse.

\begin{prediction}
If Hypothesis \ref{hypo:distribution} holds, then $\mathcal{J}_3$ can achieve entropy control, while $\mathcal{J}_4$ will reach entropy collapse.
\end{prediction}

This prediction follows directly from Theorem~\ref{theo:unbias}. Specifically, $\mathcal{J}_3$ uses IS to \emph{unbiasedly evaluate} the regularizer under $\pi_{\theta_{\text{old}}}^{T}$ despite sampling from $\pi_{\theta_{\text{old}}}$; hence it should retain entropy control if the distribution is the driving factor. Conversely, $\mathcal{J}_4$ explicitly importance-samples the gradient back to $\pi_{\theta_{\text{old}}}$ via $\rho_{i,t}^{-1}$ (Theorem~\ref{theo:unbias}), and thus should lose the ability to regulate entropy if entropy control is a distribution-level property.

\begin{figure}[t]
    \centering
    \includegraphics[width=1\linewidth]{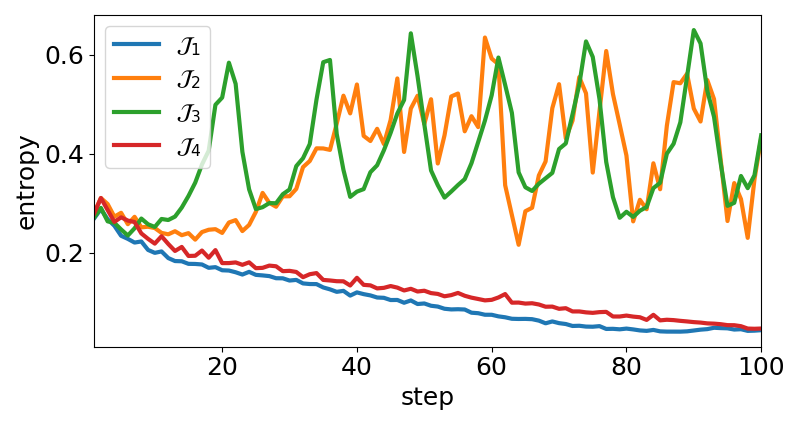}
    \caption{$\mathcal{J}_3$ successfully regulates entropy, while $\mathcal{J}_4$ leads to entropy collapse.}
    \label{fig:j3j4}
\end{figure}

Fig.~\ref{fig:j3j4} compares $\mathcal{J}_3$ and $\mathcal{J}_4$ and shows that only $\mathcal{J}_3$ maintains stable policy entropy, whereas $\mathcal{J}_4$ collapses. This outcome supports Hypothesis \ref{hypo:distribution}: sustained exploration is governed by the \emph{distribution under which the expected gradient is evaluated}, rather than by the incidental presence of a few high-entropy samples. Equivalently, entropy control emerges as a property of distribution-centric optimization.

\begin{tcolorbox}[colback=blue!5, colframe=blue!70] \textit{\textbf{Distribution-Centric Optimization Dominates the Exploration–Exploitation Trade-off.}} \end{tcolorbox}

\begin{table*}[t]
 \caption{Main results on seven math-reasoning benchmarks. DCPO consistently outperforms GRPO and entropy-based baselines, achieving the best average score and the largest gains on difficult contest benchmarks.}

 \label{tab:main_results_all}
 \centering
 \resizebox{\textwidth}{!}{%
  \renewcommand{\arraystretch}{1.4}
  \begin{tabular}{
    >{\raggedright\arraybackslash}m{2.8cm}  
    *{7}{>{\centering\arraybackslash}m{1.6cm}}| 
    >{\centering\arraybackslash}m{2.4cm}    
   }
   \toprule
   \textbf{Benchmarks} & \textbf{AIME24$_{\times32}$} & \textbf{AIME25$_{\times32}$} & \textbf{\ AMC$_{\times32}$}  & \textbf{GSM8K} & \textbf{MATH} & \textbf{Minerva} & \textbf{Olympiad} & \textbf{Average} \\
   \midrule
   \rowcolor{gray!10} Qwen2.5-7B  & 7.91 & 5.31 & 36.2 & 88.5 & 64.4 & 22.0 & 29.3 & 36.24 \\
   \quad +GRPO  & 17.1 & 7.60 & 65.8 & 92.3 & 75.6 & 36.8 & 38.8 & 47.70\\
   \quad +Entropy-Reg  & 13.6 & 8.85 & 67.4 & 92.3 & 76.8 & 35.5 & 39.1 & 47.65\\
   \quad +Entropy-Adv  & 14.8 & 8.23 & 67.3 & 91.9 & 76.6 & 38.2 & 37.5 & 47.79\\
   \quad +AEPO & 17.5 & 11.4 & 69.3 & 92.9 & 78.0 & 37.8 & 40.2 & 49.57 \\
    \rowcolor{blue!5}
    \quad +DCPO & \textbf{18.8} & \textbf{15.3} & \textbf{69.9} & \textbf{93.0} & \textbf{79.2} & \textbf{38.2} & \textbf{42.2} & \textbf{50.94} \\
    \noalign{\vspace{-1.5mm}}
    \rowcolor{blue!5} \qquad $\Delta$ vs. GRPO
    & \textcolor{green!70!black}{\textbf{+1.7}}
    & \textcolor{green!70!black}{\textbf{+7.7}}
    & \textcolor{green!70!black}{\textbf{+4.1}}
    & \textcolor{green!70!black}{\textbf{+0.7}}
    & \textcolor{green!70!black}{\textbf{+3.6}}
    & \textcolor{green!70!black}{\textbf{+1.4}}
    & \textcolor{green!70!black}{\textbf{+3.4}}
    & \textcolor{green!70!black}{\textbf{+3.24 (+28.3\%)}} \\
   \midrule
   \rowcolor{gray!10} Qwen2.5-math-7B  & 15.5 & 7.81 & 42.1 & 65.4 & 59.4 & 11.0 & 26.7 & 32.56 \\
   \quad +GRPO  & 32.1 & 11.0 & 72.4 & 88.7 & 80.6 & 34.6 & 41.8 & 51.60 \\
   \quad +Entropy-Reg  & 31.4 & 10.1 & 74.3 & 87.0 & 80.4 & 35.7 & 40.4 & 51.10 \\
   \quad +Entropy-Adv  & 32.1 & 11.4 & 72.1 & 87.8 & 80.4 & 37.5 & 42.1 & 51.76 \\
   \quad +AEPO
    & \textbf{36.4} & 12.6 & 74.8 & 89.5 & 81.6 & 39.0 & 43.0 & 53.87 \\
   \rowcolor{blue!5}
    \quad +DCPO
    & \textbf{35.2} & \textbf{17.8} & \textbf{76.3} & \textbf{92.0} & \textbf{82.0} & \textbf{43.4} & \textbf{43.7} & \textbf{55.77} \\
    \noalign{\vspace{-1.5mm}}
    \rowcolor{blue!5}
    \qquad $\Delta$ vs. GRPO
    & \textcolor{green!70!black}{\textbf{+3.1}}
    & \textcolor{green!70!black}{\textbf{+6.8}}
    & \textcolor{green!70!black}{\textbf{+3.9}}
    & \textcolor{green!70!black}{\textbf{+3.3}}
    & \textcolor{green!70!black}{\textbf{+1.4}}
    & \textcolor{green!70!black}{\textbf{+8.8}}
    & \textcolor{green!70!black}{\textbf{+1.9}}
    & \textcolor{green!70!black}{\textbf{+4.17 (+21.9\%)}} \\
    \bottomrule
   \textbf{Benchmarks} & \textbf{AIME24$_{\times32}$} & \textbf{AIME25$_{\times32}$} & \textbf{HMMT25$_{\times32}$} & \textbf{Minerva} & \textbf{Olympiad} & \textbf{GPQA$_{\text{diamond}}$} & \textbf{MMLU$_{\text{pro}}$} & \textbf{Average} \\
   \midrule
   \rowcolor{gray!10} Qwen3-4B  & 36.4 & 22.7 & 13.0 & 42.3 & 47.2 & 6.06 & 72.67 & 34.33 \\
   \quad +GRPO & 52.9 & 41.5 & 27.1 & 46.7 & 60.0 & 10.1 & 74.1 & 44.63\\
    \quad +Entropy-Reg  & 52.4 & 42.6 & 25.3 & 46.3 & 60.1 & 10.6 & 74.1 & 44.48 \\
    \quad +Entropy-Adv & 51.6 & 41.7 & 25.5 & 46.0 & 58.4 & 10.6 & 74.8 & 44.08 \\
    \quad +AEPO & 54.5 & \textbf{43.7} & 26.3 & 47.8 & 60.9 & 10.6 & 73.9 & 45.31\\
    \rowcolor{blue!5}
    \quad +DCPO & \textbf{56.6} & 42.7 & \textbf{28.8} & \textbf{48.2} & \textbf{61.4} & \textbf{11.1} & \textbf{76.1} & \textbf{46.41}\\
    \noalign{\vspace{-1.5mm}}
    \rowcolor{blue!5}
    \qquad $\Delta$ vs. GRPO
    & \textcolor{green!70!black}{\textbf{+3.7}}
    & \textcolor{green!70!black}{\textbf{+1.2}}
    & \textcolor{green!70!black}{\textbf{+1.7}}
    & \textcolor{green!70!black}{\textbf{+1.5}}
    & \textcolor{green!70!black}{\textbf{+1.4}}
    & \textcolor{green!70!black}{\textbf{+1.0}}
    & \textcolor{green!70!black}{\textbf{+2.0}}
    & \textcolor{green!70!black}{\textbf{+1.78 (+17.2\%)}} \\
   \bottomrule
  \end{tabular}
 }
\end{table*}

\begin{table}[t]
 \caption{Pass@128 results on contest benchmarks with Qwen3-4B. DCPO consistently improves GRPO and entropy-based baselines.}
 \label{tab:pass128}
 \centering
 \resizebox{0.48\textwidth}{!}{%
  \renewcommand{\arraystretch}{1.4}
  \begin{tabular}{
    >{\raggedright\arraybackslash}m{2.5cm}  
    *{3}{>{\centering\arraybackslash}m{1.7cm}} 
   }
   \toprule
   \textbf{Pass@128} & \textbf{AIME24} & \textbf{AIME25} & \textbf{HMMT25}  \\
   \midrule
   \rowcolor{gray!10} Qwen3-4B  & 76.7 & 63.3 & 60.0  \\
   \quad +GRPO & 83.3 & 76.7 & 66.7 \\
    \quad +Entropy-Reg  & 83.3 & 73.3 & 66.7  \\
    \quad +Entropy-Adv & 83.3 & 73.3 & 70.0  \\
    \quad +AEPO & 83.3 & 76.7 & 66.7 \\
    \rowcolor{blue!5}
    \quad +DCPO & \textbf{86.7} & \textbf{80.0} & \textbf{73.3} \\
   \bottomrule
  \end{tabular}
 }
\end{table}

\begin{table*}[h]
\centering
\caption{Different exploration levels in DCPO on Qwen2.5-Math-7B. Moderate exploration yields the best overall performance.}
\label{tab:entropy_ablation}
 \resizebox{\textwidth}{!}{%
  \renewcommand{\arraystretch}{1.4}
  \begin{tabular}{
    >{\raggedright\arraybackslash}m{3.2cm}  
    *{7}{>{\centering\arraybackslash}m{1.6cm}}| 
    >{\centering\arraybackslash}m{1.6cm}    
   }
   \toprule
   \textbf{Benchmarks} & \textbf{AIME24} & \textbf{AIME25} & \textbf{AMC}  & \textbf{GSM8K} & \textbf{MATH} & \textbf{Minerva} & \textbf{Olympiad} & \textbf{Average} \\
   \midrule
   \rowcolor{gray!10} Qwen2.5-math-7B  & 15.5 & 7.81 & 42.1 & 65.4 & 59.4 & 11.0 & 26.7 & 37.66 \\
   \rowcolor{blue!5}\quad +DCPO $\mathcal{H}_0=0.25$  & 35.2 & 17.8 & \textbf{76.3} & \textbf{92.0} & \underline{82.0} & \textbf{43.4} & 43.7 & \textbf{55.77}\\
   \rowcolor{blue!5}\quad +DCPO $\mathcal{H}_0=0.50$  & \textbf{37.0} & \underline{18.1} & \underline{73.8} & \underline{91.6} & \textbf{82.6} & \underline{39.7} & \textbf{44.1} & \underline{55.27} \\
   \rowcolor{blue!5}\quad +DCPO $\mathcal{H}_0=0.75$ & \underline{35.5} & \textbf{19.4} & 70.4 & 91.1 & 81.2 & 37.9 & \underline{43.9} & 54.21 \\
   \rowcolor{blue!5}\quad +DCPO $\mathcal{H}_0=1.00$  & 32.8 & 13.4 & 68.0 & 90.6 & 80.6 & 38.6 & 43.0 & 52.43 \\
   \bottomrule
  \end{tabular}
 }
\end{table*}

\subsection{Distribution-Centric Policy Optimization}
The comparative experiments above indicate that entropy regulation is driven primarily by the \emph{distribution} shaping the expected gradient, rather than by the chance of sampling a few ``good'' exploratory trajectories. 

Based on this finding, we introduce \textbf{Distribution-Centric Policy Optimization (DCPO)}, which is obtained by taking Eq.~\eqref{eq:DCPO} and combining like terms into a compact objective. Instead of betting on lucky samples, DCPO continuously constructs a virtual target distribution that is preferable to the current policy for exploration (in our case, the distribution with high temperature and entropy is more diverse), and consistently uses this virtual distribution to guide optimization. This provides a persistent exploratory signal and helps the policy escape local optima.

\begin{itemize}[leftmargin=1.5em]
\item \textbf{Fully Online and On-Policy:}
DCPO alleviates entropy collapse in a fully online and on-policy manner, drawing all samples from the current policy. This removes distribution mismatch between sampling and updates, yielding lower-bias gradients and more stable learning dynamics. Consequently, DCPO can maintain stable entropy regulation without entropy collapse, supporting effective (near-optimal) exploration.
    
\item \textbf{REINFORCE as Regularization:}
According to Theorem~\ref{theo:REIN-ent}, DCPO leverages REINFORCE as a \emph{regularization} term to mitigate entropy collapse: it provides an unbiased mechanism to suppress negatively rewarded samples while preserving  optimization towards the target distribution. Moreover, although REINFORCE gradients can be high-variance, their impact is effectively controlled in DCPO because the regularizer is scaled by a small coefficient $\alpha$.
    
\item \textbf{Double Importance Sampling:}
DCPO uses two importance ratios with distinct roles. The first ratio $r_{i,t}(\theta)$ corrects the behavior policy toward the updated online policy, with clipping to control variance and stabilize learning. The second ratio $\rho_{i,t}$ adjusts the \emph{expectation of the regularizer's gradient} toward the virtual target distribution (e.g., a higher-entropy distribution), thereby providing a distribution-centric exploratory signal while maintaining on-policy optimization.
\end{itemize}

\section{Experiments}
Full details of the experimental setup (models, datasets, benchmarks, and implementation) are provided in Appendix~\ref{app:exp_setup}. In brief, we fine-tune Qwen2.5-7B \cite{yang2024qwen2}, Qwen2.5-Math-7B \cite{yang2024qwen2}, and Qwen3-4B \cite{yang2025qwen3} on DAPO-17K \cite{yu2025dapo}.

\subsection{Main Results}
Table~\ref{tab:main_results_all} reports our main results across three backbones on our seven-benchmark suites. 
Across all settings, \textbf{DCPO achieves the best average performance} and consistently improves over GRPO and entropy-based baselines, supporting that \emph{distribution-centric} entropy control is an effective mechanism for stabilizing exploration and improving reasoning accuracy.

Overall, DCPO improves the average score over GRPO by \textbf{+3.24} on Qwen2.5-7B, \textbf{+4.17} on Qwen2.5-Math-7B, and \textbf{+1.78} on Qwen3-4B. 
These correspond to \textbf{+28.3\%}, \textbf{+21.9\%}, and \textbf{+17.2\%} additional gains relative to GRPO's improvement over the corresponding base models. 
Notably, the gains concentrate on the hardest contest-style benchmarks. For Qwen2.5-7B, DCPO yields \textbf{+7.7} on AIME25 and \textbf{+4.1} on AMC over GRPO, and improves Olympiad by \textbf{+3.4}. 
For Qwen2.5-Math-7B, DCPO provides a large boost on Minerva (\textbf{+8.8} over GRPO) and further improves AIME25 by \textbf{+6.8}, while also improving GSM8K (\textbf{+3.3}) without sacrificing stability. 
On Qwen3-4B, DCPO continues to improve difficult benchmarks such as AIME24 (\textbf{+3.7}), HMMT25 (\textbf{+1.7}), and Olympiad (\textbf{+1.4}), and also brings consistent gains on broader evaluations (e.g., \textbf{+2.0} on MMLU$_{\text{pro}}$ and \textbf{+1.0} on GPQA$_{\text{diamond}}$).

We further evaluate high-budget sampling performance with Pass@128 on contest benchmarks using Qwen3-4B (Table~\ref{tab:pass128}). 
DCPO consistently yields the best results, improving GRPO from 83.3 to \textbf{86.7} on AIME24, from 76.7 to \textbf{80.0} on AIME25, and from 66.7 to \textbf{73.3} on HMMT25. 
These gains suggest that DCPO not only improves typical decoding performance but also raises the solution upper bound under larger sampling budgets, consistent with its goal of maintaining an exploratory optimization distribution.
\begin{figure}[t]
    \centering
    \includegraphics[width=0.9\linewidth]{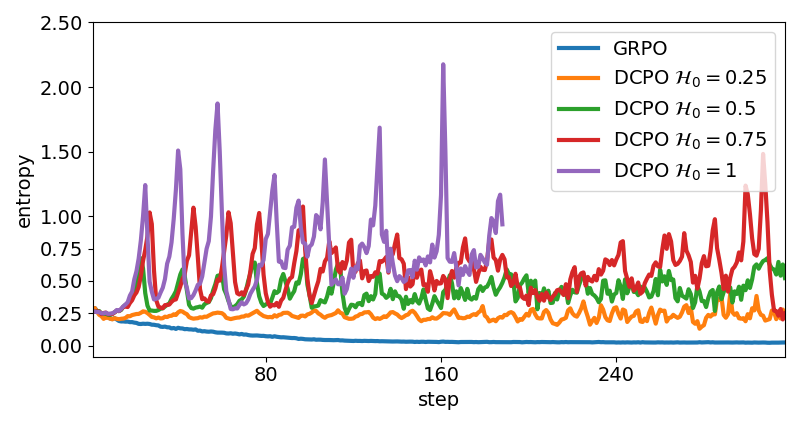}
    \caption{Training entropy
     of DCPO on different exploration levels.}
    \label{fig:AEPO2-main}
\end{figure}

\begin{table*}
    \caption{Key component ablations of DCPO on Qwen2.5-Math-7B. Removing either double importance sampling or the REINFORCE term causes entropy collapse and severely degrades performance.}
    \label{tab:module_ablation}
    \resizebox{\textwidth}{!}{
    \renewcommand{\arraystretch}{1.4}
    \begin{tabular}{
    >{\centering\arraybackslash}m{0.85\textwidth} 
    >{\centering\arraybackslash}m{0.15\textwidth}    
   }
    \toprule
    \textbf{Ablation loss and performance} & \textbf{Entropy control} \\
    \midrule
    \noalign{\vspace{0.5mm}}
    \multicolumn{2}{c}{
    $\displaystyle 
    \begin{aligned}
    \mathcal{J}_{\text{w/o double IS}} 
    &= \; \mathbb{E}_{(q,a),O\sim\pi_{\theta_{\mathrm{old}}}}
            \frac{1}{G}\sum_{i=1}^{G}\frac{1}{|o_i|}\sum_{t=1}^{|o_i|}
            \min\!\Big[
            r_{i,t}(\theta)\,\Big(\hat{A}_t+\textcolor{red}{\alpha R(o_i)}\Big),
            \;
            \mathrm{clip}\!\big(r_{i,t}(\theta),1-\epsilon,1+\epsilon\big)\,\Big(\hat{A}_t+\textcolor{red}{\alpha  R(o_i)}\Big)\Big]
    \end{aligned}$} \refstepcounter{equation}(\theequation)\label{eq:ablation1} 
    \\
    \cmidrule(l{1em}r){1-2}
    \rowcolor{red!5}
    \resizebox{0.85\textwidth}{!}{
    \begin{tabular}{
    *{7}{>{\centering\arraybackslash}m{1.4cm}}| 
    >{\centering\arraybackslash}m{2.4cm}|    
   }
    AIME24 & AIME25 & AMC & GSM8K & MATH & Minerva & Olympiad & Avg \\
    30.2 & 11.6 & 73.6 & 87.7 & 80.0 & 36.0 & 40.6 & 51.39 \textcolor{red}{(-4.38)} \\
    \end{tabular}} &  Entropy collapse \\
    \midrule

    \multicolumn{2}{c}{
    $\displaystyle 
    \begin{aligned}
    \mathcal{J}_{\text{w/o REINFORCE}} 
    &= \; \mathbb{E}_{(q,a),\,O\sim\pi_{\theta_{\mathrm{old}}}}
            \frac{1}{G}\sum_{i=1}^{G}\frac{1}{|o_i|}\sum_{t=1}^{|o_i|}
            \min\!\Big[
            r_{i,t}(\theta)\,\textcolor{red}{(1+\alpha\rho_{i,t})\hat{A}_t},
            \;
            \mathrm{clip}\!\big(r_{i,t}(\theta),1-\epsilon,1+\epsilon\big)\,\textcolor{red}{(1+\alpha\rho_{i,t})\hat{A}_t}\Big]
    \end{aligned}$} \refstepcounter{equation}(\theequation)\label{eq:ablation2} 
    \\
    \cmidrule(l{1em}r){1-2}
    \rowcolor{red!5}
    \resizebox{0.85\textwidth}{!}{
    \begin{tabular}{
    *{7}{>{\centering\arraybackslash}m{1.4cm}}| 
    >{\centering\arraybackslash}m{2.4cm}|    
   }
    AIME24  & AIME25 & AMC & GSM8K & MATH & Minerva & Olympiad & Avg \\
    30.9 & 11.1 & 73.9 & 87.9 & 81.2 & 36.8 & 41.3 & 51.87 \textcolor{red}{(-3.90)} \\
    \end{tabular}} &  Entropy collapse \\
    \bottomrule
    \end{tabular}
}
\end{table*}

\subsection{Ablation Study}
We ablate DCPO from two complementary angles: (i) the \emph{degree of exploration} controlled by the target entropy level $\mathcal{H}_0$, and (ii) the \emph{key components} that enable exploration.

\textbf{Exploration level.} Fig.~\ref{fig:AEPO2-main} and Table~\ref{tab:entropy_ablation} varies $\mathcal{H}_0$ on Qwen2.5-Math-7B. We find that moderate exploration performs best overall: $\mathcal{H}=0.25$ achieves the highest average score, while increasing $\mathcal{H}_0$ beyond 0.5 gradually reduces the average. Although higher $\mathcal{H}_0$ can improve certain difficult contest benchmarks (e.g., AIME25 peaks at 19.4 with $\mathcal{H}=0.75$), overly aggressive exploration weakens overall optimization.

\textbf{Component ablations.} Table~\ref{tab:module_ablation} removes either the \textbf{double IS} correction or the REINFORCE term in DCPO. Both ablations lead to entropy collapse and a substantial drop in average performance , i.e., a degradation of \textbf{3.9--4.4} points compared to the full DCPO setting (55.77). This confirms that distribution-centric entropy control requires both (i) IS toward the target distribution as an explicit exploration guide and (ii) a REINFORCE term as a regularizer to induce and sustain exploration.

\section{Discussion}
\subsection{Exploration–Exploitation trade-off}
DCPO and AEPO provides clear evidence that entropy is a decisive factor in the EE trade-off, enabling arbitrary degrees of control over exploratory behavior, whereas GRPO remains strictly exploitation-driven.  By venturing into unfamiliar reasoning spaces during optimization, DCPO ultimately converges to superior test-time reasoning ability.
\subsection{Precision-Prediction trade-off}
Beyond the classical EE trade-off, our results reveal a new dimension of trade-off that emerges when attempting to sustain exploration. Specifically, maintaining exploration requires breaking free from distributional sharpening—meaning that the gradient expectation must incorporate distributional knowledge not present in the original sampling distribution. DCPO and AEPO realize this principle through two theoretically equivalent yet practically distinct approaches:

\begin{itemize}[leftmargin=1.5em]
    \item AEPO directly samples from the target distribution, allowing the algorithm to obtain genuine information from that distribution—potentially even complete information in ideal cases. This enables it to theoretically converge to the target distribution in a deterministic and stable manner. However, sampling from a different distribution is inherently off-policy, which introduces a degree of distribution shift and can consequently impair optimization effectiveness.
    \item DCPO samples from the current policy and uses IS to adjust the gradient expectation toward the target distribution, making the algorithm fully on-policy and thus a more favorable choice in the short term. However, IS cannot fully access the information contained in the target distribution—it only provides an approximation. This approximation inevitably introduces variance, which accumulates over the course of optimization and may lead to instability in the long run. 
\end{itemize}

These characteristics collectively define:

\begin{tcolorbox}[colback=blue!5, colframe=blue!70]
\textit{\textbf{Precision-Prediction (PP) Trade-off}: Relying more on precise sampling improves stability, while relying more on predictive and virtual distribution by IS improves optimization effectiveness — and for any given scenario, an optimal Precision-Prediction balance exists that enables the desired Exploration–Exploitation trade-off.}
\end{tcolorbox}


\section{Conclusion}
We study entropy collapse in RLVR for large language models, where objectives such as GRPO are often exploitation-driven and progressively suppress exploration. We revisit the exploration from a \emph{distribution-centric} perspective and show that resisting entropy collapse is governed by the distribution through its induced expected gradient rather than by rare ``good'' samples. Based on this insight, we propose \emph{Distribution-Centric Policy Optimization} (DCPO), a fully on-policy, distribution-level regularization method. Across multiple backbones and seven-benchmark suites, DCPO consistently outperforms GRPO and entropy-based baselines---especially on harder contest benchmarks---and ablations verify that stable entropy control relies on both target-distribution importance sampling and REINFORCE-as-regularization.

Beyond the specific algorithm, our analyses suggest a broader \textbf{Precision-Prediction trade-off}: \emph{precise} sampling from an exploratory distribution stabilizes entropy regulation, whereas \emph{predicting} the exploratory distribution via importance sampling improves optimization effectiveness. For a given training scenario, an optimal PP balance enables the desired exploration--exploitation (EE) trade-off, and REINFORCE-as-regularization is essential for realizing this balance in RLVR.

Future work includes generalizing the distribution-centric perspective beyond entropy control and the EE trade-off. We aim to establish distribution-centric optimization as a general foundation for controllable, efficient policy learning, not only for exploration, but for a wider class of optimization goals in large-scale RL.

\section*{Declaration of AI}
AI is only used for translation and language polishing in this paper.

\bibliography{reference}
\bibliographystyle{icml2026}

\newpage
\appendix
\onecolumn

\section{Related work}

\subsection{RL for LLM Post-Training and Reasoning}
Reinforcement learning (RL) is a key paradigm for post-training large language models (LLMs) to align with human feedback and task objectives \cite{ouyang2022training}. 
Prior work includes RLHF-style optimization with PPO \cite{openai2023gpt4, team2024gemini1_5, wei2023instructiongpt, liu2023visual, schulman2017proximal} and more efficient preference-based alternatives such as DPO \cite{rafailov2024direct, zhong2024dpo, wang2024comprehensive}. 
For domains with verifiable, rule-based rewards (e.g., math and code), RLVR has enabled strong reasoning systems such as DeepSeek-R1, Kimi k1.5, and Qwen3 \cite{lambert2024tulu, wen2025reinforcement, guo2025deepseekr1, team2025kimi, team2025kimi1_5, yang2025qwen3}. 
Within RLVR, GRPO has become a widely used baseline due to its value-function-free design \cite{shao2024deepseekmath, liu2024deepseek, guo2025deepseek}, but it is often exploitation-driven and prone to entropy collapse \cite{cui2025entropy}.

\subsection{Entropy and Exploration in RL for LLMs}
Entropy is a practical proxy for exploration in policy optimization \cite{sutton1999policy, williams1992simple}. 
In LLM RL, however, naive entropy bonuses are often coarse or unstable under long-horizon generation. Recent work therefore uses entropy more strategically---either by constraining it to selected tokens (e.g., SIREN, AEnt) \cite{jiang2025rethinking, shen2025entropy} or by incorporating it into reward shaping and credit assignment (e.g., GTPO, EGSW, CHORD) \cite{tan2025gtpo, vanlioglu2025entropy, zhang2025policy}; related baselines such as Entropy-Reg and Entropy-Adv further modify the objective with entropy-related terms \cite{o2016pgq, hou2025advancing, cheng2025reasoning}. 
AEPO is a strong recent baseline that applies temperature-adjusted REINFORCE regularization using samples from a temperature-scaled distribution \cite{wang2025arbitrary}. 
Despite these efforts, existing methods remain largely sample-centric and lack principled control at the \emph{policy distribution} level, often yielding limited or inconsistent improvements as training proceeds \cite{yu2025dapo, wang2025beyond, cui2025entropy}.

Our work addresses this gap by advocating a \emph{distribution-centric} perspective: we characterize entropy regulation through the evaluation distribution and its induced expected gradient, and propose DCPO to realize distribution-level regularization in a fully on-policy manner.

\section{Experimental Setup Details}
\label{app:exp_setup}

\subsection{Model and Dataset}
We evaluate DCPO on mathematical reasoning RL fine-tuning. Experiments are conducted on three backbones: Qwen2.5-7B, Qwen2.5-Math-7B \cite{yang2024qwen2}, and Qwen3-4B~\cite{yang2025qwen3}. 
All models are trained on DAPO-17K \cite{yu2025dapo}, which provides RL-oriented question--answer pairs with verifiable rewards.

\subsection{Benchmarks and Metrics}
We evaluate on seven-benchmark suites. For Qwen2.5-7B and Qwen2.5-Math-7B, we report results on AIME24 \cite{hf_aime2024}, AIME25 \cite{balunovic2025matharena}, AMC \cite{lightman2023lets}, GSM8K \cite{cobbe2021gsm8k}, MATH \cite{lightman2023lets}, Minerva Math \cite{lewkowycz2022solving}, and Olympiad \cite{lightman2023lets}. 
For Qwen3-4B, some standard math benchmarks can be relatively easy and may saturate, so we additionally include more challenging evaluations. Specifically, we report results on AIME24 \cite{hf_aime2024}, AIME25 \cite{balunovic2025matharena}, HMMT25 \cite{balunovic2025matharena}, Minerva Math \cite{lewkowycz2022solving}, Olympiad \cite{lightman2023lets}, GPQA${\text{diamond}}$ \cite{rein2023gpqa}, and MMLU${\text{pro}}$ \cite{wang2024mmlupro}.

Since some contest benchmarks contain only a small number of problems, we report \textbf{Avg@32} for contest-style benchmarks (e.g., AIME/AMC/HMMT): for each problem, we sample 32 solutions and average correctness across samples, which provides a more stable estimate than Pass@32 in low-cardinality settings. 
We additionally report \textbf{Pass@128} on selected contest benchmarks for Qwen3-4B.

\begin{table*}[t]
\centering
\caption{Summary of implementation and evaluation details for all compared methods.}
\label{tab:impl_details}
\renewcommand{\arraystretch}{1.4}
\setlength{\tabcolsep}{5pt}
\small
\begin{tabular}{
l
>{\centering\arraybackslash}m{3.2cm}
>{\centering\arraybackslash}m{2.2cm}
>{\centering\arraybackslash}m{2.7cm}
>{\centering\arraybackslash}m{4.2cm}
}
\toprule
\multicolumn{5}{l}{\textbf{RL settings}} \\
\midrule
Hardware & \multicolumn{4}{c}{8$\times$A800 GPUs (40GB)} \\
Policy model init & \multicolumn{4}{c}{Qwen2.5-7B / Qwen2.5-Math-7B / Qwen3-4B} \\
Training dataset & \multicolumn{4}{c}{DAPO-17K} \\
Max response length & \multicolumn{4}{c}{8192} \\
Batch / mini-batch size & \multicolumn{4}{c}{512 / 128} \\
Rollout group size $G$ & \multicolumn{4}{c}{8} \\
Learning rate & \multicolumn{4}{c}{$1\times10^{-6}$} \\
Temperature (training) & \multicolumn{4}{c}{1.0} \\
Clip range $(\epsilon_{\text{low}},\epsilon_{\text{high}})$ & \multicolumn{4}{c}{(0.2, 0.2)} \\
Reward type & \multicolumn{4}{c}{Binary reward} \\
\toprule
\multicolumn{5}{l}{\textbf{Evaluation settings}} \\
\midrule
Max response length & \multicolumn{4}{c}{8192} \\
Top-$p$ (eval) & \multicolumn{4}{c}{0.95} \\
Temperature (eval) & \multicolumn{4}{c}{0.1 for Pass@1;\;\;0.6 for Pass@128} \\
\toprule
\multicolumn{5}{l}{\textbf{Method-specific settings}} \\
\midrule
\textbf{Method} & \textbf{Entropy bonus} & \textbf{$T_{\text{high}}/T_{\text{low}}$} & \textbf{REINFORCE samples} & \textbf{Extra coefficient} \\
\midrule
GRPO & -- & -- & -- & -- \\
Entropy-Reg & $\lambda=0.015$ & -- & -- & -- \\
Entropy-Adv & $(\beta,\kappa)=(0.4,2)$ & -- & -- & -- \\
AEPO & -- & 1.2 / 0.8 & 60 (entropy up), 30 (entropy down) & -- \\
DCPO & -- & 1.2 / 0.8 & -- &
$\displaystyle \alpha=\frac{0.1(\mathcal{H}-\mathcal{H}(\pi_{\theta_{\text{old}}}))}{\text{accuracy rate of batch}}$ \\
\bottomrule
\end{tabular}
\end{table*}

\end{document}